\newcommand{\wc}[1]{}
\newcommand{\ct}[1]{}
\newcommand{\trm}[1]{\textit{{#1}}}
\newtheorem{theorem}{Theorem}
\newtheorem{corollary}{Corollary}
\newtheorem{lemma}{Lemma}
\newenvironment{proof}{\textit{Proof.}}{\hfill$\Box$}
\begin{document}

\twocolumn[
\aistatstitle{Scaling Graph-based Semi Supervised Learning to Large Number of Labels Using Count-Min Sketch}
\aistatsauthor{Partha Pratim Talukdar \\ Machine Learning Department \\ Carnegie Mellon University, USA \\ \texttt{ppt@cs.cmu.edu} \And William Cohen \\ Machine Learning Department \\ Carnegie Mellon University, USA \\ \texttt{wcohen@cs.cmu.edu}}
\aistatsaddress{}
]

\newcommand{\pst}[1]{p_{#1}(+1 | s, t)}
\newcommand{\p}[3]{p_{#1}(+1 | #2, #3)}
\newcommand{\x}[3]{x_{#1}(+1 | #2, #3)}
\newcommand{\pstminus}[1]{p_{#1}(-1 | s, t)}

\newcommand{\refalg}[1]{Algorithm~\ref{#1}}
\newcommand{\refeqn}[1]{Equation~\ref{#1}}
\newcommand{\reffig}[1]{Figure~\ref{#1}}
\newcommand{\reftbl}[1]{Table~\ref{#1}}
\newcommand{\refsec}[1]{Section~\ref{#1}}

\newcommand{\KB}[1]{\textrm{KB}_{#1}}

\newcommand{\query}[1]{\{#1\}}
\newcommand{\sketch}{\mathbb{S}}

\newcommand{\reminder}[1]{}
\newcommand{\method}[1]{\mbox{\textsc{#1}}}
\newcommand{\norm}[1]{\left|\!\left|#1\right|\!\right|^{2}_2}
\newcommand{\onenorm}[1]{|\!|#1|\!|_1}
\newcommand{\boundary}{\partial}
\newcommand{\vol}{\textit{vol}}

\newcommand{\tinBE}{$t \in \{ B, \ldots, E \}$}
\newcommand{\xrst}{x_{r,s,t}}
\newcommand{\zbrst}{z^{b}_{r,s,t}}
\newcommand{\zerst}{z^{e}_{r,s,t}}
\newcommand{\qrst}{\phi_{r}(s,t,+1)}
\newcommand{\qrt}[1]{\phi_{r}(#1,t,+1)}
\newcommand{\qrsty}{\phi_{r}(s,t,y)}
\newcommand{\qrty}[1]{\phi_{r}(#1,t,y)}
\newcommand{\bw}{{\bf w}}

\newcommand{\YH}{Y}

\newcommand{\YTW}{\hat{Y}}

\newcommand{\SH}{\hat{S}}

\newcommand{\YBl}[0]{\mathbf{Q}_{l}}

\newcommand{\YB}[1]{\mathbf{Q}_{#1}}

\newcommand{\YHBl}[0]{\hat{\mathbf{Q}}_{l}}

\newcommand{\MB}[0]{\mathbf{M}}
\newcommand{\WB}[0]{\mathbf{W}}
\newcommand{\RB}[0]{\mathbf{R}}

\newcommand{\pucont}[0]{p_u^{\textrm cont}}

\newcommand{\reals}{\mathbb{R}}
\newcommand{\realsp}{\reals_+}

\newcommand{\eat}[1]{}
\newcommand{\notforproposal}[1]{}

\newcommand{\VT}[0]{\tilde{V}}
\newcommand{\vT}[0]{\tilde{v}}
\newcommand{\VP}[0]{V^{'}}
\newcommand{\Vs}[0]{V_{s}}

\newcommand{\EP}[0]{E^{'}}

\newcommand{\pvcont}[0]{p_v^{cont}}
\newcommand{\pvinj}[0]{p_v^{inj}}
\newcommand{\pvabnd}[0]{p_v^{abnd}}

\newcommand{\MadExact}{\method{MAD-Exact}}
\newcommand{\MadSketch}{\method{MAD-Sketch}}

\begin{abstract}
Graph-based Semi-supervised learning (SSL) algorithms have been successfully used in a large number of applications. These methods classify initially unlabeled nodes by propagating label information over the structure of graph starting from seed nodes. Graph-based SSL algorithms usually scale linearly with 
the number of distinct labels ($m$), and require $O(m)$ space on each node. Unfortunately, there exist many applications of practical significance with very large $m$ over large graphs, demanding better space and time complexity. In this paper, we propose \MadSketch{}, a novel graph-based  SSL algorithm which compactly stores label distribution on each node using Count-min Sketch, a randomized data structure. We present theoretical analysis showing that under mild conditions, \MadSketch{} can reduce space complexity at each node from $O(m)$ to $O(\log m)$, and achieve similar savings in time complexity as well. We support our analysis through experiments on multiple real world datasets. We observe that \MadSketch{} achieves similar performance as existing state-of-the-art graph-based SSL algorithms, while requiring smaller memory footprint and at the same time achieving up to 10x speedup. We find that \MadSketch{} is able to scale to datasets with one million labels, which is beyond the scope of existing graph-based SSL algorithms.
\end{abstract}

\section{Introduction}

Graph-based semi-supervised learning (SSL) methods based on label
propagation work by associating classes with each labeled ``seed''
node, and then propagating these labels over the graph in some
principled, iterative manner \cite{zhu2003semi,talukdar:nra09}.  The
end result of SSL is thus a matrix $Y$, where $Y_{u\ell}$ indicates
whether node $u$ is positive for class $\ell$.  SSL techniques based
on label propagation through graphs are widely used, and they are
often quite effective for very large datasets.  Since in many cases
these method converge quite quickly, the time and space requirements
scale linearly with both the number of edges $|E|$ in the graph, and
the number of classes $m$.

Unfortunately, there exist a number of applications where both $m$ and
$|E|$ are very large: for instance, Carlson et al. 
\cite{DBLP:conf/aaai/CarlsonBKSHM10} describe an SSL system with
hundreds of overlapping classes, and Shi et al \cite{shi2009hash}
describe a text classification task with over 7000 classes. Similarly, the ImageNet dataset \cite{deng2009imagenet} poses a classification task involving 100,000 classes. In this paper, we seek to extend graph-based SSL methods to cases where there are a large number of potentially overlapping labels.  To do this, we represent
the class scores for $u$ with a \trm{count-min sketch} \cite{cormode2005improved}, a randomized
data structure.

Graph-based SSL using a count-min sketch has a number of properties
that are desirable, and somewhat surprising.  First, the sketches can
be asymptotically smaller than conventional data structures.  Normally,
the vectors of label scores that are propagated through a graph are
dense, i.e. of size $O(m)$.  However, we show that if the seed labels
are sparse, or if the label scores at each node have a ``power-law''
structure, then the storage at each node can be reduced from $O(m)$ to
$O(\log m)$.  Experimentally, we show that power-law label scores
occur in natural datasets.  Analytically, we also show that a similar
space reduction can be obtained for graphs that have a certain
type of community structure.  A second useful property of the
count-min sketch is that for label propagation algorithms using a
certain family of updates---including Modified Adsorption (MAD)
\cite{talukdar:nra09}, the harmonic method \cite{zhu2003semi} and
several other methods \cite{wu2012learning}---the linearity of the
count-min sketch implies that a similarly large reduction in
\emph{time complexity} can be obtained.  Experimentally, little or no
loss in accuracy is observed on moderate-sized datasets, and \MadSketch{}, the new method scales to a SSL problem with millions of edges and nodes, and one million labels.

\label{sec:intro}

\section{Related Work}
\label{sec:related}

Although tasks with many labels are common, there has been
surprisingly little work on scaling learning algorithms to many
labels. Weinberger et al. \cite{Weinberger:2009:FHL:1553374.1553516}
describe a feature-hashing method called a ``hash kernel'', which is
closely related to a count-min sketch, and show that it can help
supervised learning methods scale to many classes.  Other approaches
to scaling classifiers to many classes include reducing the dimension
of label space, for instance by error-correcting output codes
\cite{dekel2002multiclass} or label embedding trees
\cite{bengio2010label}.  Certain supervised learning methods, notably
nearest-neighbor methods, also naturally scale well to many classes
\cite{YangChute94}.  The main contribution of this work is provide
similarly scalable methods for SSL methods based on label propagation. 
A graph-based SSL method aimed at retaining only top ranking labels 
per-node out of a large label set is presented in \cite{agrawal2013multi}. 
Similar ideas to induce sparsity on the label scores were also explored in \cite{talukdar2010experiments,das2012graph}. In contrast to these methods, the method 
presented in paper doesn't attempt to enforce sparsity, and instead 
focuses on compactly storing the \emph{entire} label distribution on each node 
 using Count-min Sketch.

The count-min sketch structure has been used for a number of tasks,
such as for the storage distributional similarity data for words in
NLP \cite{goyal2010sketch}.  Perhaps the most related of these tasks
is the task of computing and storing personalized PageRank vectors for
each node of a graph \cite{sarlos2006randomize,song2009scalable}.
Prior analyses, however, do not apply immediately to SSL, although
they can be viewed as bounds for a special case of SSL in which each
node is associated with a 1-sparse label vector and there is one label
for every node.  We also include new analyses that provide tighter
bounds for skewed labels, and graphs with community structure.

\section{Preliminaries}
\label{sec:prelim}

\subsection{Count-Min Sketches (CMS)}
\label{sec:cms_review}

The count-min sketch is a widely-used probabilistic scheme \cite{cormode2005improved}. At a high level, it stores an approximation of a mapping between integers $i$ and associated real values $y_i$.  Specifically, the count-min sketch consists of a $w \times d$ matrix $\sketch{}$, together with $d$ hash functions $h_1,\ldots,h_d$, which are chosen randomly from a pairwise-independent family. A sketch is always initialized to an all-zero matrix.

Let $\textbf{y}$ be a sparse vector, in which the $i$-th component has
value $y_i$.  To store a new value $y_i$ to be associated with
component $i$, one simply updates as follows: 
\[
	\sketch{}_{j,h_j(i)} \leftarrow \sketch{}_{j,h_j(i)} + y_i,~\forall 1 \leq{} j \leq{} d
\]
To retrieve an approximate value $\hat{y}_i$ for $i$, one computes the minimum value over all $j:1\leq{}j\leq{}d$ of $\sketch{}_{j,h_j(i)}$ as follows:
\begin{equation}
	\hat{y_i} = \min_{1 \leq{} j \leq{} d} \sketch{}_{j, h_j(i)} \label{eqn:cms_query}
\end{equation}
We note that $\hat{y_i}$ will never underestimate $y_i$, but may overestimate it if there are hash collisions.  As it turns out, the parameters $w$ and $d$ can be chosen
so that with high probability, the overestimation error is small (as
we discuss below, in Section~\ref{sec:analysis}).

Importantly, count-min sketches are linear, in the following sense: if
$\sketch{}_1$ is the matrix for the count-min sketch of $\textbf{y}_1$ and
$\sketch{}_2$ is the matrix for the count-min sketch of $\textbf{y}_2$, then
$a \sketch{}_1 + b \sketch{}_2$ is the matrix for the count-min sketch of the vector $a \textbf{y}_1 + b \textbf{y}_2$.

\subsection{Graph-based Semi-Supervised Learning}
\label{sec:graph_ssl_review}

We now briefly review graph-based SSL algorithms. 

\subsection*{Notation}
\label{sec:notation}

All the algorithms compute a soft assignment of labels to the nodes of
a graph $G = (V, E, W)$, where $V$ is the set of nodes with $|V| = n$, $E$ is the set of edges, and $W$ is an edge weight matrix. Out of the $n = n_l + n_u$ nodes in $G$, $n_l$ nodes are labeled, while the remaining $n_u$ nodes are unlabeled. 
If edge $(u,v)\not\in E$, $W_{uv}=0$. The (unnormalized) Laplacian,
$L$, of $G$ is given by $L = D - W$, where $D$ is an $n \times n$
diagonal degree matrix with $D_{uu} = \sum_v W_{uv}$. Let $S$ be an $n
\times n$ diagonal matrix with $S_{uu} = 1$ iff node $u \in V$ is
labeled. That is, $S$ identifies the labeled nodes in the graph. 
$C$ is the set of labels, with $|C| = m$ representing the total number
of labels. $Q$ is the $n \times m$ matrix storing training label
information, if any.  $\YH$ is an $n \times m$ matrix of soft label assignments, with $\YH_{vl}$
representing the score of label $l$ on node $v$. A graph-based
SSL computes $\YH$ from $\{ G, S, Q \}$.

\subsection*{Modified Adsorption (MAD)}
\label{sec:mad}

%

Modified Adsorption (MAD) \cite{talukdar:nra09} is a modification of
an SSL method called Adsorption \cite{baluja2008video}. MAD can be expressed as an
unconstrained optimization problem, whose objective is shown below.
\begin{eqnarray}
\centering
  \min_{\YH} && \sum_{l \in C} \left[ \mu_1 
\left( Q_{l} - \YH_{l}\right)^\top S \left( Q_{l} - \YH_{l} \right) + \right. \nonumber \\
&& \quad \quad \left. \mu_2~\YH_l^{\top}~L^{'}~\YH_l~+~\mu_3 \norm{\YH_l - R_l} \right] \label{eqn:mad_obj}
\end{eqnarray}

\noindent where $\mu_1$, $\mu_2$, and $\mu_3$ are hyperparameters;
$L^{'} = D^{'} - W^{'}$ is the Laplacian of an undirected graph derived from $G$, but
with revised edge weights; and $R$ is an $n \times m$ matrix of
per-node label prior, if any, with $R_{l}$ representing the $l^{\mathrm{th}}$ column of $R$. As in Adsorption, MAD allows labels on seed nodes to change. In case of MAD, the three random-walk probabilities, $\pvinj,~\pvcont,~\mathrm{and}~\pvabnd$, defined by Adsorption on each node are folded inside the matrices $S, L^{'}, \mathrm{and}~R$, respectively. The optimization problem in (\ref{eqn:mad_obj}) can be solved
with an efficient iterative update which is shown below and repeated until convergence, described in detail by~\cite{talukdar:nra09}.
\begin{equation} \label{eq:update}
\begin{array}{c}
	\YH{}^{(t + 1)}_{v} \leftarrow \frac{1}{M_{vv}} (\mu_1 \times S_{vv} \times Q_v + \mu_2 \times D_{v}^{(t)} \nonumber \\
	 \hspace{2cm} +~\mu_3 \times R_v),~\forall v \in V \\
	\mathrm{where}~~D_{v}^{(t)} = \sum_{u \in V} (W_{uv}^{'} + W_{vu}^{'}) \times \YH{}_{u}^{(t)},\\
	\mathrm{and}~M_{vv} = \mu_1 S_{vv} + \mu_2 \sum_{u \ne v} (W_{uv}^{'} + W_{vu}^{'}) + \mu_3
\end{array}
\end{equation}
%
%
Typically $\YH^{(t)}$ is sparse for very small $t$, and becomes dense in later iterations.
MAD is easily parallelizable in a MapReduce framework~\cite{talukdar2008wsa,rao2009ranking}, which makes it suitable for SSL on large datasets.

\section{\MadSketch{}: Using Count-Min Sketch for Graph SSL}
\label{sec:method}

%
In this section, we propose \MadSketch{}, a count-min sketch-based extension of MAD\footnote{We use MAD as a representative graph-based SSL algorithm, which also generalizes prior work of \cite{zhu2003semi}.} \cite{talukdar:nra09}. Instead of storing labels and their scores exactly as in MAD, \MadSketch{} uses count-min sketches to compactly store labels and their scores. \MadSketch{} uses the following equation to update label sketches at each node, and this process is repeated until convergence.
\begin{eqnarray}
	\sketch{}_{Y,v}^{(t + 1)} &\leftarrow& \frac{1}{M_{vv}} \left( \mu_1 \times S_{vv} \times \sketch{}_{Q,v} + \right. \nonumber \\
	&& \mu_2 \times \sum_{u \in V} (W_{uv}^{'} + W_{vu}^{'}) \times \sketch{}_{Y,u}^{(t)} + \nonumber \\
	&& \left. \mu_3 \times \sketch{}_{R,v} \right),~\forall v \in V
\end{eqnarray}
where $\sketch{}_{Y,v}^{(t + 1)}$ is the count-min sketch corresponding to label score estimates on node $v$ at time $t + 1$, $\sketch{}_{Q,v}$ is the sketch corresponding to any seed label information, and finally $\sketch{}_{R,v}$ is the sketch corresponding to label regularization targets in node $v$. Please note that due to linearity of CMS, we don't need to unpack the labels for each update operation. This results in significant runtime improvements compared to non-sketch-based MAD as we will see in \refsec{sec:expt_performance}.

After convergence, \MadSketch{} returns $\sketch{}_{Y,v}$ as the
count-min sketch containing estimated label scores on node $v$. The
final label score of a label can be obtained by querying this sketch
using \refeqn{eqn:cms_query}.  We denote the result of this query by
$\YTW$.

\section{Analysis}
\label{sec:analysis}

\subsection{Sparse initial labelings}

We now turn to the question of how well \MadSketch{} approximates the
exact version---i.e., how well $\YTW$ approximates $\YH$.  We begin
with a basic result on the accuracy of count-min sketches
\cite{cormode2005improved}.

\begin{theorem}[Cormode and Muthukrishnan \cite{cormode2005improved}] \label{thm:cm1}
Let $\textbf{y}$ be an vector, and let $\tilde{y}_i$ be the estimate
given by a count-min sketch of width $w$ and depth $d$ for $y_i$.  If
$w\geq{}\frac{e}{\eta}$ and $d\geq{}\ln(\frac{1}{\delta})$, then \(
\tilde{y_i} \leq y_i + \eta \onenorm{\textbf{y}} \) with probability
at least 1-$\delta$.
\end{theorem}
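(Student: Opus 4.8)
The plan is to reproduce the standard two-step analysis of the count-min sketch: first control the expected collision mass in a single row and convert it into a failure probability via Markov's inequality, and then amplify the confidence by taking the minimum over the $d$ rows, whose hash functions are chosen independently. One point worth stating up front is that this one-sided guarantee — the estimate overshoots $y_i$ by at most $\eta\onenorm{\textbf{y}}$, with no absolute value — relies on all entries of $\textbf{y}$ being non-negative, which holds in the MAD setting since seed scores, regularization targets, and propagated label scores are all $\geq 0$.

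Fix a coordinate $i$ and a row $j\in\{1,\dots,d\}$. For each $k\neq i$ let $I_{k,j}$ be the indicator of the collision event $h_j(k)=h_j(i)$, and set $X_{i,j}=\sum_{k\neq i} I_{k,j}\, y_k$. Since the cell $\sketch_{j,h_j(i)}$ accumulates $y_i$ together with the masses of exactly those $k$ that hash to the same bucket, and all masses are non-negative, we have the exact identity $\sketch_{j,h_j(i)} = y_i + X_{i,j}$ with $X_{i,j}\geq 0$. Pairwise independence of $h_j$ gives $\Pr[I_{k,j}=1]\leq 1/w$, so $\mathbb{E}[X_{i,j}] \leq \frac{1}{w}\sum_{k\neq i} y_k \leq \frac{1}{w}\onenorm{\textbf{y}} \leq \frac{\eta}{e}\onenorm{\textbf{y}}$, where the last inequality uses the hypothesis $w\geq e/\eta$.

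Next, apply Markov's inequality to the non-negative random variable $X_{i,j}$: $\Pr[\,X_{i,j}\geq \eta\onenorm{\textbf{y}}\,] \leq \mathbb{E}[X_{i,j}]/(\eta\onenorm{\textbf{y}}) \leq 1/e$. Hence in any fixed row the reported value exceeds $y_i+\eta\onenorm{\textbf{y}}$ with probability at most $1/e$. The estimate $\tilde{y}_i=\min_{1\leq j\leq d}\sketch_{j,h_j(i)}$ can exceed $y_i+\eta\onenorm{\textbf{y}}$ only if every one of the $d$ rows does; because the rows use independently chosen hash functions, these $d$ events are independent, so the probability of the bad event is at most $(1/e)^d = e^{-d} \leq e^{-\ln(1/\delta)} = \delta$, using $d\geq\ln(1/\delta)$. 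On the complementary event, and since $\tilde{y}_i\geq y_i$ always (each cell equals $y_i$ plus a non-negative quantity), we obtain $y_i \leq \tilde{y}_i \leq y_i + \eta\onenorm{\textbf{y}}$, which proves the claim.

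There is no real difficulty here beyond the bookkeeping; the one thing to be careful about is the non-negativity assumption, which is exactly what makes the per-cell identity $\sketch_{j,h_j(i)} = y_i + X_{i,j}$ hold and lets Markov's inequality be applied to $X_{i,j}$. If $\textbf{y}$ had signed entries one would instead need a two-sided bound and a different (e.g.\ median-based) estimator, but that refinement is not needed for \MadSketch{}.
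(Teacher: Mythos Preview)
Your argument is correct and is exactly the standard Cormode--Muthukrishnan analysis: bound the expected collision mass in one row by $\onenorm{\textbf{y}}/w\leq(\eta/e)\onenorm{\textbf{y}}$, use Markov to get a per-row failure probability of $1/e$, and then use independence across the $d$ rows to drive the overall failure below $e^{-d}\leq\delta$. Your remark about non-negativity is also on point and worth keeping.

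There is nothing to compare against here, though: the paper does not supply its own proof of this theorem. It is stated as a cited result from \cite{cormode2005improved} and then invoked as a black box in the proof of Theorem~\ref{thm:main}. So your write-up is not an alternative to the paper's proof but a reconstruction of the original source's argument, and it matches that argument faithfully.
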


To apply this result, we consider several assumptions that might be
placed on the learning task.  One natural assumption is that the
initial seed labeling is \trm{$k$-sparse}, which we define to be a
labeling so that for all $v$ and $\ell$, $\onenorm{Y_{v\cdot}} \leq k$
for some small $k$ (where $Y_{v\cdot}$ is the $v$-th row of $Y$).  If
$\YTW$ is a count-min approximation of $\YH$, then we define the
\trm{approximation error of $\YTW$ relative to $\YH$} as \(
\max_{v,\ell} (\YTW_{v,\ell} - \YH_{v,\ell}) \). Note that
approximation error cannot be negative, since a count-min sketch can
only overestimate a quantity.

\begin{theorem}  \label{thm:main}
If the parameters of \MadSketch{} $\mu_1,\mu_2,\mu_3$ are such that
$\mu_1 + \mu_2 + \mu_3 \leq 1$, $Y$ is $k$-sparse and binary, and
  sketches are of size $w \geq \frac{ek}{\epsilon}$ and $d \geq
  \ln\frac{m}{\delta}$ then the approximation error \MadSketch{} is
  less than $\epsilon$ with probability 1-$\delta$.
\end{theorem}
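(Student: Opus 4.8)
The plan is to reduce the statement to the count-min accuracy bound of Theorem~\ref{thm:cm1}, applied one node at a time, after establishing two facts: that \MadSketch{} is literally storing count-min sketches of the exact MAD solution, and that this exact solution has bounded $\ell_1$ mass at every node.

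\emph{Step 1 (the sketches track the exact run).} The key observation is that the \MadSketch{} update and the exact MAD update (\refeqn{eq:update}) are the \emph{same} linear recombination --- with the same scalar coefficients $\mu_1 S_{vv}/M_{vv}$, $\mu_2(W'_{uv}+W'_{vu})/M_{vv}$ and $\mu_3/M_{vv}$ --- applied in one case to the vectors $Q_v, \YH_u^{(t)}, R_v$ and in the other to their count-min sketches. Since count-min sketches are linear, I would argue by induction on $t$ that $\sketch_{Y,v}^{(t)}$ is exactly the sketch of $\YH_v^{(t)}$ for every node $v$ and iteration $t$, provided the initial sketches $\sketch_{Q,v}$, $\sketch_{R,v}$, $\sketch_{Y,v}^{(0)}$ are the sketches of $Q_v$, $R_v$, $\YH_v^{(0)}$. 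Hence at convergence $\YTW_{v,\ell}$ is precisely the count-min estimate of $\YH_{v,\ell}$ read off a width-$w$, depth-$d$ sketch of the row $\YH_{v,\cdot}$, and the problem collapses to bounding a single count-min query error.

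\emph{Step 2 (bounding $\onenorm{\YH_{v,\cdot}}$).} I would then prove the invariant $\onenorm{\YH_v^{(t)}} \le k$ for all $v$ and $t$, hence $\onenorm{\YH_{v,\cdot}} \le k$ at convergence. This goes by induction on $t$: every object appearing in the update ($W'$, $Q$, $R$, $\YH^{(0)}$) is non-negative; a binary $k$-sparse seed row has $\onenorm{Q_v}\le k$, and the per-node regularization targets are likewise bounded by $k$ (in the usual MAD instantiation $R$ is a small dummy-label prior, or simply $R=0$); and the update is a non-negative combination whose coefficients sum to at most one. It is exactly here that the hypothesis $\mu_1+\mu_2+\mu_3 \le 1$ enters, making the update a sub-stochastic averaging operator: applied to the seed vectors, the regularization vectors, and the current estimates $\{\YH_u^{(t)}\}$, it cannot increase the largest $\ell_1$ mass among them, which is at most $k$.

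\emph{Step 3 (count-min bound and a union bound).} Finally, fix a node $v$ and set $\eta = \epsilon/k$. Since $w \ge ek/\epsilon = e/\eta$ and $d \ge \ln(m/\delta)$, Theorem~\ref{thm:cm1} applied to $\YH_{v,\cdot}$ with failure parameter $\delta/m$ gives, for each label $\ell$, $\YTW_{v,\ell} \le \YH_{v,\ell} + \eta\,\onenorm{\YH_{v,\cdot}} \le \YH_{v,\ell} + \epsilon$ with probability at least $1-\delta/m$; a union bound over the $m$ labels makes this hold for all $\ell$ with probability at least $1-\delta$. Because a count-min sketch never underestimates, $0 \le \YTW_{v,\ell} - \YH_{v,\ell} < \epsilon$ for every $\ell$, which is the claimed bound on the approximation error.

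The step I expect to be the genuine obstacle is Step~2: making precise the sense in which MAD's fixed-point iteration is a sub-stochastic averaging operator. This means unwinding exactly how the derived graph $G'$, its degree matrix, and the regularization/dummy targets $R$ are defined, checking that the combining coefficients at each node are non-negative and sum to at most one (pinning down the precise role of $\mu_1+\mu_2+\mu_3\le1$, which plausibly also underwrites convergence of the iteration), and handling boundary cases such as isolated or unlabeled nodes and any self-loops in $G'$. A minor, purely bookkeeping point is that the approximation error as defined is a maximum over \emph{all} pairs $(v,\ell)$, so $d \ge \ln(m/\delta)$ yields the $1-\delta$ guarantee per node; a bound uniform over all $n$ nodes would cost an extra $\ln n$ in the depth, and I would state this caveat explicitly.
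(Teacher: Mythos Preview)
Your proposal is correct and follows essentially the same line as the paper's (very terse) proof: bound $\onenorm{\YH_{v,\cdot}}\le k$ by an induction that uses $\mu_1+\mu_2+\mu_3\le 1$ to make the update a sub-stochastic average, then invoke Theorem~\ref{thm:cm1} with $\eta=\epsilon/k$ and a union bound over the $m$ labels. Your Step~1 (that linearity makes \MadSketch{} literally compute the sketch of the exact iterate) is left implicit in the paper but is exactly the right thing to spell out, and your closing caveat---that $d\ge\ln(m/\delta)$ secures the bound per node while the stated $\max_{v,\ell}$ guarantee would strictly need an extra $\ln n$ in the depth---is a genuine observation the paper glosses over.
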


\begin{proof}
The result holds immediately if $\forall v$, $\onenorm{\YH_{v\cdot}}
\leq k$, by application of the union bound and Theorem~\ref{thm:cm1}
with $\eta=\epsilon/k$.  However, if $\mu_1 + \mu_2 + \mu_3 \leq 1$,
then at each iteration of MAD's update, Equation~\ref{eq:update}, the
new count-min sketch at a node is bounded by weighted average of
previously-computed count-min sketches, so by induction the bound on
$\onenorm{\YH_{v\cdot}}$ will hold.
\end{proof}

Although we state this result for \MadSketch{}, it also holds for
other label propagation algorithms that update scores by a weighted
average of their neighbor's scores, such as the harmonic method
\cite{zhu2003semi}.

\subsection{Skewed labels}

Perhaps more interestingly, the label weights $\YH_{v\cdot}$
associated with real datasets exhibit even more structure than
sparsity alone would suggest.  Define a Zipf-like distribution to be
one where the frequency of the $i$-th most frequent value
$f_i \propto i^{-z}$.
For graphs from two real-world datasets, one with 192 distinct labels and another with 10,000 
labels, Figure~\ref{fig:freebase_lab_score_dist} shows the label scores at
each rank, aggregated across all nodes in these two graphs.  Note that the score distributions are
Zipf-like\footnote{Although in the case of the 192-label dataset, the
  Zipf-like distribution is clearly truncated somewhere below
  $i=192$.} with $z\approx{}1$, indicating a strong skew toward a few
large values.

This structure is relevant because it is known that count-min sketches
can store the largest values in a skewed data stream even more
compactly.  For instance, the following lemma can be extracted from
prior analysis \cite{cormode2005summarizing}.

\begin{lemma}{\small {\bf [Cormode and Muthukrishnan \cite{cormode2005summarizing}, Eqn~5.1]}} \label{lemma:cm2}
Let $\textbf{y}$ be an vector, and let $\tilde{y}_i$ be the estimate
given by a count-min sketch of width $w$ and depth $d$ for $y_i$.  Let
the $k$ largest components of $\textbf{y}$ be
$y_{\sigma_1},\ldots,y_{\sigma_k}$, and let $t_k=\sum_{k'>k}
y_{\sigma_{k^{'}}}$ be the weight of the ``tail'' of $\textbf{y}$.  If
$w\geq{}\frac{1}{3k}$, $w>\frac{e}{\eta}$ and $d\geq{} \ln\frac{3}{2}
\ln\frac{1}{\delta}$, then \( \tilde{y_i} \leq y_i + \eta t_k \) with
probability at least 1-$\delta$.
\end{lemma}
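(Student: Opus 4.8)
The plan is to reconstruct the standard count-min sketch point-query analysis, but --- following \cite{cormode2005summarizing} --- to split the collision error incurred in each hash row into a contribution from the $k$ heavy coordinates $y_{\sigma_1},\dots,y_{\sigma_k}$ and a contribution from the tail, and then to control these two pieces separately rather than lumping everything into $\onenorm{\textbf{y}}$ as in Theorem~\ref{thm:cm1}. The point of doing so is that the heavy coordinates are few (so they can be \emph{avoided} entirely, not just averaged down) while the tail has small total mass $t_k$, so Markov against $t_k/w$ is much stronger than Markov against $\onenorm{\textbf{y}}/w$.

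First I would fix a coordinate $i$ and one of the $d$ hash rows, say row $j$. For a fixed draw of the hash functions $\sketch{}_{j,h_j(i)} = y_i + X^{(j)}$ where $X^{(j)} = \sum_{i'\neq i} y_{i'}\,\mathbf{1}[h_j(i')=h_j(i)] \geq 0$; I decompose $X^{(j)} = H^{(j)} + T^{(j)}$, with $H^{(j)}$ collecting the terms $i'\in\{\sigma_1,\dots,\sigma_k\}\setminus\{i\}$ and $T^{(j)}$ the remaining (tail) terms. Step one: by pairwise independence $\Pr[h_j(\sigma_\ell)=h_j(i)]=1/w$ for each heavy coordinate, so a union bound gives $\Pr[H^{(j)}>0]\leq k/w$, which is at most some absolute constant $\beta_1 < 1$ (indeed $\le 1/3$) under the stated relation between $w$ and $k$. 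Step two: again by pairwise independence and linearity of expectation, $\mathbb{E}[T^{(j)}] = \sum_{k'>k} y_{\sigma_{k'}}\Pr[h_j(\sigma_{k'})=h_j(i)] = t_k/w$, so Markov's inequality gives $\Pr[T^{(j)}\geq \eta t_k] \leq t_k/(w\eta t_k) = 1/(\eta w) < 1/e$ using $w > e/\eta$. Step three: row $j$ can push the estimate above $y_i + \eta t_k$ only if $H^{(j)}>0$ or $T^{(j)} > \eta t_k$, so by a union bound the per-row failure probability is at most $\beta := \beta_1 + 1/e$, a constant strictly less than $1$.

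Finally, since $\hat y_i = \min_{1\le j\le d}\sketch{}_{j,h_j(i)}$ and the $d$ rows use independently chosen hash functions, $\Pr[\hat y_i > y_i + \eta t_k]$ is exactly the probability that every row fails, which is $\leq \beta^d$; the depth hypothesis ($d$ of order $\ln\frac1\delta$, the precise constant being $1/\ln\frac1\beta$) then makes this at most $\delta$. Combined with the trivial $\hat y_i \geq y_i$, this yields $y_i \le \hat y_i \le y_i + \eta t_k$ with probability at least $1-\delta$. The one point that needs care is the tail estimate and its interplay with the heavy-hitter estimate: the collision indicators are only pairwise (not mutually) independent, so conditioning on ``no heavy collision'' could in principle skew the tail collision probabilities. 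The clean workaround --- and the reason the decomposition above is phrased as a union bound with no conditioning --- is that each of the two marginal bounds needs only pairwise independence and linearity of expectation; after that, pinning down the exact numerical constants in the stated width and depth conditions is pure bookkeeping, and in any case the statement is quoted essentially verbatim from \cite[Eqn~5.1]{cormode2005summarizing}.
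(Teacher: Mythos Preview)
Your argument is correct and follows exactly the approach the paper describes: decompose the per-row collision error into a heavy part (at most $k$ items, controlled by a union bound giving $\Pr[H^{(j)}>0]\le k/w$) and a tail part (controlled by Markov against $t_k/w$), union-bound the two, then take the minimum over $d$ independent rows. The paper itself does not give a full proof but only this same two-part sketch and defers to \cite{cormode2005summarizing}; your write-up simply fills in those details faithfully, including the observation that only pairwise independence is needed so no conditioning is required.
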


The proof for this statement comes from breaking down the count-min
error analysis into two independent parts: (1) errors due to
collisions among the $k$ largest values, which are unlikely since
there are only $k$ values to be hashed into the $w>3k$ bins; and (2)
errors due to collisions with the tail items, which have small impact
since $t_k$ is small.  As a application of this analysis, Cormode and
Muthukrishnan also showed the following.
\begin{theorem}{\small {\bf [Cormode and Muthukrishnan \cite{cormode2005summarizing}, Theorem 5.1]}} \label{thm:cm3}
Let $\textbf{y}$ represent a Zipf-like distribution with parameter
$z$.  Then with probability at least 1-$\delta$, $\textbf{y}$ can be
approximated to within error $\eta$ by a count-min sketch of width
$O(\eta^{-\min(1,1/z)})$ and depth $O(\ln\frac{1}{\delta})$.
\end{theorem}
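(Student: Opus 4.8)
The plan is to case-split on $z$ and, in each regime, invoke the sharpest available count-min bound with parameters tuned to the tail of $\textbf{y}$. Throughout I assume $\textbf{y}$ is normalized so that $\onenorm{\textbf{y}} = \Theta(1)$ (it is a Zipf-like \emph{distribution}), so that ``error $\eta$'' and ``error $\eta\,\onenorm{\textbf{y}}$'' agree up to constants; the guarantee to be proved is the usual point-query guarantee, i.e.\ for a fixed coordinate $i$ the stated accuracy holds with probability $1-\delta$ (if one wanted all coordinates simultaneously the depth would pick up an extra $\ln m$).

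\textbf{Case $z \le 1$.} Here $\min(1,1/z)=1$, so the target width is $O(\eta^{-1})$, which is exactly what the basic bound delivers: by Theorem~\ref{thm:cm1} with $w \ge e/\eta$ and $d \ge \ln(1/\delta)$ we get $\tilde y_i \le y_i + \eta\onenorm{\textbf{y}} = y_i + O(\eta)$. No skew-specific reasoning is needed, and rescaling constants finishes this case.

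\textbf{Case $z > 1$.} Now $\min(1,1/z) = 1/z$ and we must beat the plain bound. I would apply Lemma~\ref{lemma:cm2}, which says that with $w \ge \max(3k,\,e/\eta')$ and $d = \Theta(\ln(1/\delta))$ the error is at most $\eta' t_k$, where $t_k = \sum_{k' > k} y_{\sigma_{k'}}$ is the mass of the tail past the top $k$ entries. For a Zipf-like distribution with $z>1$ the series $\sum_j j^{-z}$ converges, so comparing the monotone-decreasing summand to $\int_k^\infty x^{-z}\,dx$ gives $t_k = \Theta(k^{1-z})$. The key move is to couple $\eta'$ to $k$: choose $\eta' = \Theta(1/k)$, so that $e/\eta' = \Theta(k)$ and a single width $w = \Theta(k)$ meets both constraints of Lemma~\ref{lemma:cm2}. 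The resulting error is $\eta' t_k = \Theta(k^{-1})\cdot\Theta(k^{1-z}) = \Theta(k^{-z})$; setting $k = \Theta(\eta^{-1/z})$ makes this at most $\eta$, hence $w = \Theta(k) = \Theta(\eta^{-1/z})$, while the depth required by Lemma~\ref{lemma:cm2} is $\Theta(\ln(1/\delta))$ --- exactly the claimed bounds.

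The main obstacle is the $z>1$ computation: one must verify the tail estimate $t_k = \Theta(k^{1-z})$ (routine, as the summand is monotone, but its constant blows up as $z \to 1^+$, which is why the statement is asymptotic in $\eta$ for fixed $z$) and, more importantly, notice that Lemma~\ref{lemma:cm2} imposes \emph{two} lower bounds on $w$, namely $w \ge 3k$ and $w > e/\eta'$, so $\eta'$ cannot be picked freely --- it must scale as $1/k$ for the single optimal width to serve both purposes. The boundary value $z = 1$ is the genuinely delicate one, since there $t_k = \Theta(\log(m/k))$ carries an extra logarithmic factor; I would subsume it under the $z \le 1$ case (the $O(\eta^{-1})$ width from Theorem~\ref{thm:cm1} still applies) and remark that the $O(\cdot)$ in the theorem absorbs that logarithm --- consistent with the ``$z \approx 1$'' behavior seen in Figure~\ref{fig:freebase_lab_score_dist}.
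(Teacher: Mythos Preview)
Your proposal is correct and is essentially the standard derivation; there is no gap. Note, however, that the paper does not actually prove this theorem---it merely cites it from Cormode and Muthukrishnan, remarking only that it follows ``as an application of this analysis'' (i.e., of Lemma~\ref{lemma:cm2}). Your argument is precisely that application: for $z\le 1$ fall back on Theorem~\ref{thm:cm1}, and for $z>1$ invoke Lemma~\ref{lemma:cm2} with $k$ and $\eta'$ balanced so that the two width constraints coincide, combined with the Zipf tail bound $t_k=\Theta(k^{1-z})$. So your route and the (implicit) route the paper points to are the same; you have simply filled in the calculation the paper omits. One cosmetic remark: the paper's statement of Lemma~\ref{lemma:cm2} has ``$w\ge \frac{1}{3k}$,'' which is evidently a typo for $w\ge 3k$ (as confirmed by the paper's own proof sketch mentioning ``$w>3k$ bins''); you used the correct form.
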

Note that this result holds even the absence of label sparsity; also,
it gives a strictly more space-efficient sketch when $z>1$ (for
instance, when $z=1.5$, then we can reduce the width of each sketch to
$\lceil\frac{e}{\sqrt{\epsilon}}\rceil$), as observed below.

\begin{corollary} If the vector label scores $\YH_{v\cdot}$ for 
every node $v$ is bounded by a Zipf-like distribution parameter $z>1$,
and sketches are of size $w \geq \frac{e}{\epsilon^{z-1}}$ and $d \geq
\ln\frac{m}{\delta}$ then the approximation error of \MadSketch{} is
less than $\epsilon$ with probability 1-$\delta$.
\end{corollary}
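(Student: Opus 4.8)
The plan is to reuse the two-step template from the proof of Theorem~\ref{thm:main}, but with the skew-aware count-min guarantee (Theorem~\ref{thm:cm3}) in place of the generic one (Theorem~\ref{thm:cm1}).

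First I would establish that the sketch \MadSketch{} returns is literally the count-min sketch of the \emph{exact} MAD solution. By linearity of the count-min sketch and induction on the iteration index $t$: the base case holds because $\sketch_{Q,v}$ and $\sketch_{R,v}$ are the sketches of $Q_v$ and $R_v$, and the inductive step holds because the \MadSketch{} update is the MAD update (\refeqn{eq:update}) with every vector replaced by its sketch, so if $\sketch_{Y,u}^{(t)}$ is the sketch of $\YH_u^{(t)}$ for all $u$, then $\sketch_{Y,v}^{(t+1)}$ is the sketch of $\YH_v^{(t+1)}$. (Unlike in Theorem~\ref{thm:main}, no condition on $\mu_1,\mu_2,\mu_3$ enters here, since this identity uses only linearity, and the Zipf-like structure is assumed directly on the converged scores $\YH_{v\cdot}$.) Passing to the limit, $\sketch_{Y,v}$ is the count-min sketch of $\YH_{v\cdot}$, so $\YTW_{v,\ell}$, obtained via \refeqn{eqn:cms_query}, is exactly the count-min point-query estimate of $\YH_{v,\ell}$. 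The claim thus reduces to bounding the point-query error of a count-min sketch whose underlying vector is dominated by a Zipf-like distribution with parameter $z>1$.

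Second I would apply Theorem~\ref{thm:cm3} to each row $\YH_{v\cdot}$ with error parameter $\eta=\epsilon$ and confidence $1-\delta/m$. For $z>1$ that theorem gives width $O(\epsilon^{-\min(1,1/z)})$, which — absorbing constants as in the $z=1.5$ remark preceding the corollary — is the stated concrete form $w \ge e/\epsilon^{z-1}$; confidence $1-\delta/m$ requires depth $O(\ln\frac{m}{\delta})$, matching $d \ge \ln\frac{m}{\delta}$. Hence for each fixed $\ell$ we get $\YTW_{v,\ell} \le \YH_{v,\ell} + \epsilon$ with probability at least $1-\delta/m$, and a union bound over the $m$ labels (as in the proof of Theorem~\ref{thm:main}) gives $\YTW_{v,\ell} - \YH_{v,\ell} \le \epsilon$ for all $\ell$ with probability at least $1-\delta$. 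Since a count-min estimate never underestimates, the approximation error $\max_{v,\ell}(\YTW_{v,\ell}-\YH_{v,\ell})$ is then less than $\epsilon$ with probability $1-\delta$.

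The step I expect to need the most care is the invocation of Theorem~\ref{thm:cm3}, i.e. the bookkeeping that turns the generic skewed-stream bound into the concrete width $e/\epsilon^{z-1}$. That theorem, via Lemma~\ref{lemma:cm2}, controls the overestimate by $\eta\,t_k$, the mass of $\YH_{v\cdot}$ beyond its $k$ largest entries; since $\YH_{v\cdot}$ is dominated by a Zipf-like distribution with $z>1$, this tail decays like $k^{-(z-1)}$, so one must balance the split point $k$ against the parameter $\eta$ (which in turn sets the width through the condition $w>e/\eta$) so that $w \ge e/\epsilon^{z-1}$ suffices to force $\eta\,t_k < \epsilon$, and check that the trade-off really collapses to the promised exponent $z-1$. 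The per-label guarantee and the union bound over the $m$ labels are then routine.
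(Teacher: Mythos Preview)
The paper offers no proof for this corollary; it is presented as immediate from Theorem~\ref{thm:cm3} together with a union bound over the $m$ labels (mirroring the template of Theorem~\ref{thm:main}), which is precisely your plan. Your linearity-plus-induction reduction---showing $\sketch_{Y,v}$ is the count-min sketch of the exact $\YH_{v\cdot}$, so that the claim becomes a per-node point-query bound---is the right way to make the argument explicit, and matches the paper's implicit treatment.

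The step you singled out does deserve the caution you gave it: Theorem~\ref{thm:cm3} promises width $O(\eta^{-\min(1,1/z)})=O(\eta^{-1/z})$ for $z>1$, whereas the corollary asserts $w\ge e/\epsilon^{\,z-1}$, and the exponents $1/z$ and $z-1$ do not coincide in general (they are merely close near the paper's illustrative $z=1.5$). The paper does not reconcile this, so when you carry out the Lemma~\ref{lemma:cm2} tail-balancing you should expect the exponent that falls out to be $1/z$ rather than $z-1$; record what you actually get rather than forcing the stated form.
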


Again, although we state this result for \MadSketch{}, it would also
hold for sketch versions of other label propagation algorithms: the
only requirement is that label scores are (empirically) always skewed
during propagation.

\subsection{Graphs with community structure}

Lemma~\ref{lemma:cm2} can also be used to derive a tighter bound for a
certain type of nearly block-diagonal graph.  Let $G=(V,E,W)$ be a
graph, let $S \subset{}V$ be a set of vertices.  For a vertex
$u\in{}S$ we define $\boundary(u,S)$ to be the total weight
of all edges from $u$ to a vertex outside of $S$, i.e. 
\( \boundary(u,S) \equiv \sum_{v\not\in{}S} W_{uv}
\). 
For convenience, we define $\vol(u) \equiv D_{uu}$, and $\vol(S) \equiv \sum_{u\in{}S} \vol(u)$.  
The \trm{max-conductance of $S$}, denoted $\psi(S)$, is
defined to be
\[ \psi(S) \equiv \max_{u\in{}S} \frac{\boundary(u,S)}{\vol(u)}
\]
Notice that this is different from the conductance of $S$, which is
often defined as
\[ \phi(S) \equiv \frac{\sum_{u\in{}S} \boundary(u,S) }{\min(\vol(S), \vol(V) - \vol(S))}
\]
When $\vol(S)$ is small and edge weights are uniform, conductance
$\phi(S)$ can be thought of as the probability that an edge will leave
$S$ when it is chosen uniformly from the set of all edges exiting any
node $u\in{}S$.  In contrast, the quantity
$\frac{\boundary(u,S)}{\vol(u)}$ is the probability that an edge will
leave $S$ when it is chosen uniformly from the set of all edges
exiting a \trm{particular} node $u$, and max-conductance is the
maximum of this quantity over $u\in{}S$.  It is not hard to see that
$\psi(S) \geq \phi(S)$ for any $S$, so having small max-conductance is
a stronger condition than having small conductance.  We have the
following result.

\begin{theorem}
Let $u$ be a vertex and $S$ be a vertex set such that $u\in{}S$,
$|S|=k$, and the min-conductance of $S$ is $\psi$, as measured using
the modified weight matrix $W'$ used by MAD.  Assume also that initial
labels are binary, and that $w\geq{}\frac{1}{3k}$, $w>\frac{e}{\eta}$
and $d\geq{} \ln\frac{3}{2} \ln\frac{1}{\delta}$.  Then for all
$\ell$, the approximation error of \MadSketch{} is bounded by $\eta
\psi$.
\end{theorem}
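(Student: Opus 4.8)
The plan is to use Lemma~\ref{lemma:cm2} with $k = |S|$, and to bound the ``tail weight'' $t_k$ of the label-score vector at node $u$ by something controlled by the max-conductance $\psi$. The intuition is that if $S$ is a near-community containing $u$, then most of the label mass that reaches $u$ during propagation originates from seeds inside $S$; the labels indexed by $\sigma_1,\dots,\sigma_k$ (the $k$ largest components) should be exactly the ``in-community'' labels, and the tail $t_k = \sum_{k' > k} \YH_{u,\sigma_{k'}}$ consists of label mass that has leaked into $u$ from outside $S$. So the first step is to set up the MAD update (Equation~\ref{eq:update}) restricted to node $u$ and argue that, under the normalization $\mu_1+\mu_2+\mu_3 \le 1$ and binary seeds, the total label mass on any node is at most $1$, so that in particular $\onenorm{\YH_{u\cdot}} \le 1$.

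Second, I would make precise the decomposition of $\YH_{u\cdot}$ into an ``interior'' part supported on labels seeded within $S$ and an ``exterior'' part. Since $|S| = k$ and seeds are binary, the number of distinct labels that are seeded strictly inside $S$ is at most $k$; hence the interior part has support size $\le k$, and everything else on node $u$ is exterior mass. Therefore $t_k \le$ (total exterior label mass on $u$). The key quantitative claim is then that the exterior label mass arriving at any $u \in S$ is at most $\psi = \psi(S)$. This is where the max-conductance definition does its work: at each propagation step, the fraction of $u$'s incoming weighted average that comes from neighbors outside $S$ is $\boundary(u,S)/\vol(u) \le \psi$ (using the $W'$-weights and the fact that $M_{vv}$ normalizes by (a quantity at least) $\vol(u)$ in the $\mu_2$ term). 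Iterating the update and using that interior nodes only ever hold mass $\le 1$, one gets a geometric-series argument showing the steady-state exterior mass at $u$ is bounded by $\psi \cdot \onenorm{\YH} \le \psi$.

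Third, with $t_k \le \psi$ established, I apply Lemma~\ref{lemma:cm2} to the vector $\YH_{u\cdot}$: the hypotheses on $w$ and $d$ ($w \ge \frac{1}{3k}$, $w > \frac{e}{\eta}$, $d \ge \ln\frac{3}{2}\ln\frac{1}{\delta}$) are exactly those assumed in the theorem statement, so the lemma gives $\YTW_{u,\ell} \le \YH_{u,\ell} + \eta\, t_k \le \YH_{u,\ell} + \eta\psi$ with probability $\ge 1-\delta$ for each fixed $\ell$, which is the claimed bound on the approximation error at $u$.

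The main obstacle I anticipate is the second step: rigorously showing that exterior label mass at $u$ is bounded by $\psi(S)$ under the MAD update. The subtlety is that the update at $u$ is a weighted average over \emph{all} neighbors, including interior ones, and those interior neighbors may themselves carry a little exterior mass (leaked in from their own boundaries); so one cannot simply say ``only a $\psi$ fraction comes from outside.'' The clean way around this is an inductive/fixed-point argument: let $\beta_t(u)$ be the exterior label mass at $u$ after $t$ iterations, show $\beta_{t+1}(u) \le \psi \cdot \onenorm{\YH} + (1-\psi')\max_{v\in S}\beta_t(v)$ for an appropriate contraction factor coming from the interior-neighbor terms, and conclude $\sup_t \max_{u\in S}\beta_t(u) \le \psi$. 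One must also be careful that $M_{vv} \ge \mu_2 \sum_{u\ne v}(W'_{uv}+W'_{vu}) = \mu_2 \vol(v)$ in the relevant (symmetrized) sense, so that the normalization really does divide the out-of-$S$ contribution by at least $\vol(u)$, matching the max-conductance definition; handling the $\mu_1,\mu_3$ terms (which only add interior/seed/regularizer mass, assumed to be zero for exterior labels if $R$ has no exterior support, or otherwise foldable into the $\onenorm{\YH}\le 1$ bound) requires a short additional remark.
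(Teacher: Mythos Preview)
Your overall plan---bound the tail $t_k$ of $\YH_{u\cdot}$ by $\psi$ and then invoke Lemma~\ref{lemma:cm2}---matches the paper. The divergence is in how you bound the tail.

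The paper does \emph{not} argue inductively on the MAD iteration. Instead it invokes the \emph{partially absorbing random walk} interpretation of MAD (from \cite{wu2012learning}), which gives $\YH_{u\ell}\le\sum_{v:Y_{v\ell}=1}\Pr(u\rightarrow v)$, where $\Pr(u\rightarrow v)$ is the probability that a walk started at $u$ is absorbed at $v$. The tail is then bounded by the total absorption mass \emph{outside} $S$; a first-passage decomposition through the boundary $B_S$, together with $\sum_{a\in S}\Pr(u\rightarrow a\ \text{within }S)\le 1$, converts max-conductance directly into $\sum_{v\notin S}\Pr(u\rightarrow v)\le\psi$.

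Your inductive route, as written, has a real gap. The recursion you propose,
\[
\beta_{t+1}(u)\ \le\ \psi\cdot\onenorm{\YH}\ +\ (1-\psi')\max_{v\in S}\beta_t(v),
\]
does \emph{not} yield $\sup_t\beta_t\le\psi$. With $\onenorm{\YH}\le 1$ and $\psi'=\psi$ the fixed point is $\beta^*=1$; even inserting a strict contraction factor $\alpha<1$ from the $\mu_1,\mu_3$ terms (so that $\beta_{t+1}\le\alpha[\psi+(1-\psi)\beta_t]$) gives fixed point $\alpha\psi/(1-\alpha+\alpha\psi)$, which is $\le\psi$ only when $\alpha\le 1/(2-\psi)$, a condition nowhere assumed. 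The underlying issue is that exterior mass \emph{accumulates}: at every iteration a fresh $\psi$-fraction leaks in from the boundary, while the previously leaked mass sitting on interior neighbors is recirculated rather than removed. The random-walk/absorption picture sidesteps this because absorption probabilities sum to at most one, so the ``first exit from $S$'' event is charged once, not once per iteration. You correctly flagged this step as the main obstacle; the resolution is to pass to the absorbing-walk representation rather than to iterate the update directly.
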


\newcommand{\walk}{\stackrel{W'}{\rightarrow}}
\newcommand{\swalk}{\stackrel{S,W'}{\rightarrow}}

\begin{proof} Let $\Pr(u\walk{}v)$ be the probability of a random
walk from $u$ to $v$ in the transition matrix defined by $W'$, and let
$\Pr(u\swalk{}v)$ be the (improper) probability\footnote{An improper
  distributions may sum to less than one.} of a random walk restricted
to vertices in $S$.

We observe that min-conductance of $S$ can be used to bound
$\sum_{v\not\in{}S} \Pr(u\walk{}v)$.  Let $B_S$ be the set of vertexes
not in $S$, but connected to some node $z\in{}S$.  Since any path to
$v\not\in{}S$ must pass through some $z\in{}B_S$, we have that
\begin{eqnarray*}
\sum_{v\not\in{}S} \Pr(u\walk{}v) & \leq & \sum_{b\in{}B_S} \Pr(u\walk{}b) \\ 
                                &=& \sum_{a\in{}S,b\in{}B_S} \Pr(u\swalk{}a) W'_{ab} \\
				&=& \sum_{a\in{}S} \Pr(u\swalk{}a) \sum_{b\in{}B_S} \frac{W_{ab}}{\vol(a)} \\
				&\leq& \sum_{a\in{}S} \Pr(u\swalk{}a) \psi \\
				&\leq& \psi \\
\end{eqnarray*}
It can be shown that $\YH_{u\ell} \leq \sum_{v:Y_{v\ell}=1}
\Pr(u\walk{}v)$---i.e., that MAD is based on a partially absorbing
random walk \cite{wu2012learning}; hence the theorem holds by
application of Lemma~\ref{lemma:cm2} with the tail $t_k$ defined as
the weight of $\sum_{v\not\in{}S} \Pr(u\walk{}v)$.
\end{proof}

An immediate consequence of this is that if $W'$ is composed of
size-$k$ subcommunities with min-conductance $\psi$, then the sketch
size can again be reduced by a factor of $\psi$:

\begin{corollary}  If $Y$ is binary and $W'$ has the property that 
all nodes $u$ are inside some subset $S$ such that $|S|<k$ and
$\psi(S)\leq{}\psi$, and if sketches are of size $w \geq
\frac{e\psi}{\epsilon}$ and $d \geq \ln\frac{m}{\delta}$ then the
approximation error of \MadSketch{} is less than $\epsilon$ with
probability 1-$\delta$.
\end{corollary}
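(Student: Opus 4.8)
The plan is to obtain this corollary as a direct instantiation of the preceding community-structure theorem, with the error parameter set to $\eta := \epsilon/\psi$, combined with a union bound over the $m$ labels. Fix an arbitrary node $u$ and let $S = S_u$ be the subset guaranteed by the hypothesis, so $u \in S$, $|S| < k$, and $\psi(S) \leq \psi$. Applying the theorem to the pair $(u,S)$ with $\eta := \epsilon/\psi$, its conclusion gives that, for every label $\ell$, the approximation error of \MadSketch{} at $u$ is at most $\eta\,\psi(S) \leq \eta\psi = \epsilon$, provided the theorem's hypotheses hold for $(u,S)$ and the chosen $\eta$.

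So the remaining work is just checking those hypotheses against the corollary's assumptions. The width requirement $w \geq \frac{e\psi}{\epsilon} = \frac{e}{\eta}$ matches the theorem's $w > \frac{e}{\eta}$ up to the boundary case, which one can take as strict or fold into the constant. The side condition $w \geq \frac{1}{3|S|}$ is automatic, since $w \geq 1$ and $\frac{1}{3|S|} \leq \frac{1}{3}$. The binary-seed hypothesis is assumed directly, and the structural fact $\YH_{u\ell} \leq \sum_{v:Y_{v\ell}=1}\Pr(u \walk v)$ (MAD as a partially absorbing walk) used in the theorem's proof requires nothing further. Since $u$ was arbitrary, this covers every node.

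For the depth, the theorem (through Lemma~\ref{lemma:cm2}) controls a single query with failure probability $\delta$ once $d \geq \ln\frac{3}{2}\ln\frac{1}{\delta}$; to make the bound hold simultaneously over all $m$ labels at $u$, I would run that estimate at per-query failure $\delta/m$ and take a union bound, giving $d \geq \ln\frac{m}{\delta}$ (using $\ln\frac{3}{2} < 1$ to absorb the constant), which is exactly the stated condition; the $n$ per-node sketches are handled in the same way, consistent with how Theorem~\ref{thm:main} is stated. I do not anticipate a genuine obstacle: everything substantive is already contained in the preceding theorem, and what remains is a parameter choice plus a routine union bound. The only point that needs care is the failure-probability bookkeeping — reconciling the $\ln\frac{3}{2}\ln\frac{1}{\delta}$ form of the depth bound with the $\ln\frac{m}{\delta}$ form claimed here, and being explicit about what exactly the union bound ranges over.
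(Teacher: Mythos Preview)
Your proposal is correct and is exactly the argument the paper has in mind: the paper gives no separate proof, stating only that the corollary is ``an immediate consequence'' of the community-structure theorem, and your instantiation $\eta=\epsilon/\psi$ together with a union bound over the $m$ labels (mirroring the bookkeeping in Theorem~\ref{thm:main}) is precisely that immediate step. Your extra care about the depth condition and the trivial check $w\geq 1 \geq \tfrac{1}{3|S|}$ is more explicit than anything the paper writes, but fully in line with it.
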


Again, although we state this result for MAD, it would also hold for
sketch versions of other label propagation algorithms that can be modeled as
partially absorbing random walks, which include the harmonic functions
method and several other well-known approaches to SSL
\cite{wu2012learning}.

\section{Experiments}
\label{sec:expt}


\begin{table*}[t]
\centering
\begin{tabular}{|l|c|c|c|c|c|c|c|}
\hline
Name & Nodes ($n$) & Edges & Labels (m) & Seed Nodes & $k-$Sparsity & $\lceil \frac{e k}{\epsilon} \rceil $& $\lceil \ln \frac{m}{\delta} \rceil$ \\
\hline
Freebase &  301,638 & 1,155,001 & 192 & 1917 & 2 & 109 & 8 \\
\hline
Flickr-10k & 41,036 &   73,191 & 10,000 & 10,000 & 1 & 55 & 12 \\
\hline
Flickr-1m & 1,281,887 & 7,545,451 & 1,000,000 & 1,000,000 & 1 & 55 & 17 \\
\hline
\end{tabular}
\caption{\label{tbl:datasets}Description of various graph-based datasets used in \refsec{sec:expt} using $\epsilon = 0.05, \delta = 0.1$. Please note that Flickr-10k is a strict subset of the Flickr-1m dataset. The last two columns compute the width and depth of a Count-min Sketch as prescribed by Theorem 2. See \refsec{sec:expt_setup} for further details.}
\end{table*}

\begin{figure}[t]
\begin{minipage}[b]{0.45\linewidth}
\begin{center}
\includegraphics[scale=0.3,keepaspectratio=true]{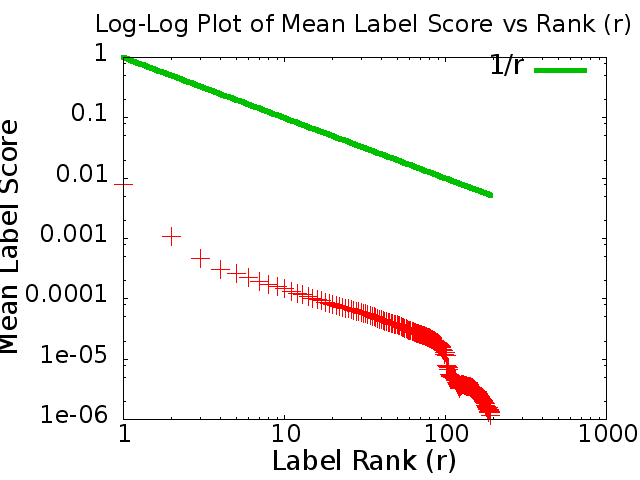}
(a) Freebase
\end{center}
\end{minipage}
\hspace{10mm}
\begin{minipage}[b]{0.45\linewidth}
\begin{center}
\includegraphics[scale=0.3,keepaspectratio=true]{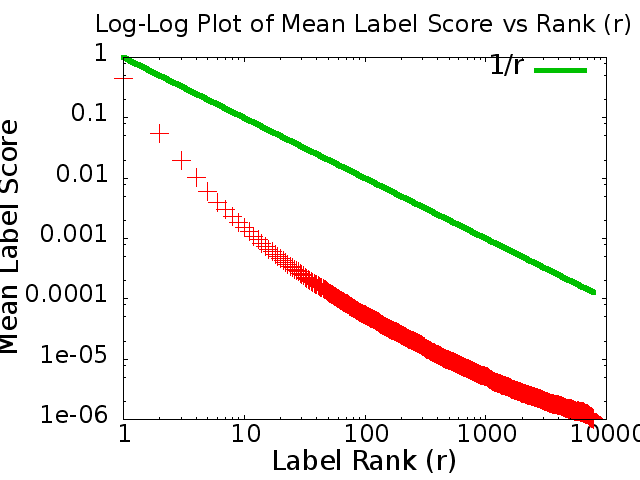}
(b) Flickr-10k
\end{center}
\end{minipage}
\caption{\label{fig:freebase_lab_score_dist}Plots in log-log space demonstrating skewness of label scores estimated by \MadExact{} over two datasets: (a) Freebase and (b) Flickr-10k. For reference, the plot for $\frac{1}{r}$ is also shown, which is Zipfian law with $z = 1$. Please note that the label scores used here are exact, and not approximated using any sketch. See \refsec{sec:label_skew} for details.}
\end{figure}

\begin{table*}[t]
\centering
\begin{tabular}{|l|c|c|c|}
\hline
& Average Memory & Total Runtime (s) & MRR \\
& Usage (GB) & [Speedup w.r.t. {\small \MadExact{}}] & \\
\hline
\MadExact{} & 3.54 & 516.63 [1.0] & 0.28 \\
\MadSketch{} ($w = 109, d = 8$) & 2.68 & 110.42 [4.7] & 0.28 \\
\hline
\hline
\MadSketch{} ($w = 109, d = 3$) & 1.37 & 54.45 [9.5] & 0.29 \\
\MadSketch{} ($w = 20, d = 8$) & 1.06 & 47.72 [10.8] & 0.28 \\
\MadSketch{} ($w = 20, d = 3$) & 1.12 & 48.03 [10.8] & 0.23 \\
\hline
\end{tabular}
\caption{\label{tbl:mem_time_mrr_compare}Table comparing average per-iteration memory usage (in GB), runtime (in seconds), and MRR of \MadExact{} and \MadSketch{} (for various sketch sizes) when applied over the Freebase dataset. Using sketch size of $w = 109$ and $d = 8$ as prescribed by Theorem 2  for this dataset, we find that \MadSketch{} is able to obtain the same MRR (performance) as \MadExact{}, while using a reduced memory footprint and achieving about $4.7$ speedup. This is our main result in the paper. Additionally, we find that even more aggressive sketches (e.g., $w = 20, d = 8$ in this case) may be used in practice, with the possibility of achieving further gains in memory usage and runtime. See \refsec{sec:expt_performance} for details.}
\end{table*}


\begin{figure*}[t]
\centering
\includegraphics[scale=0.4,keepaspectratio=true]{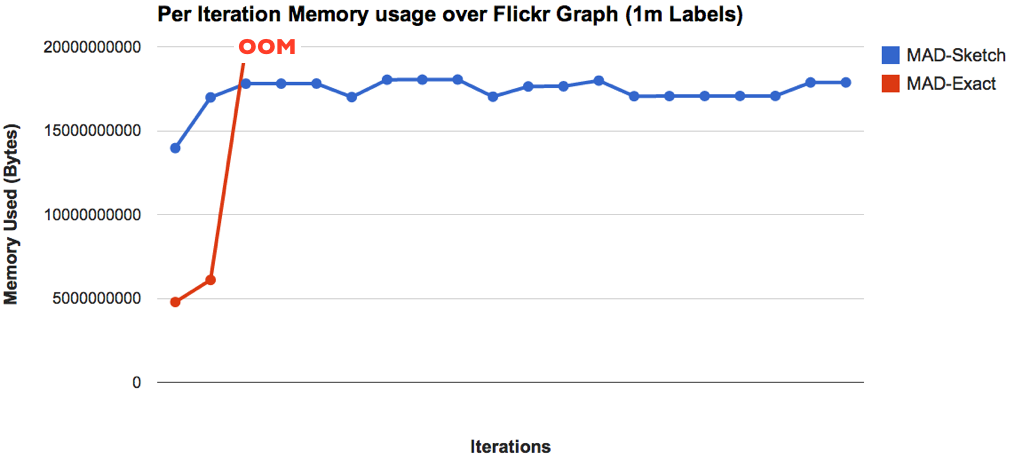}
\caption{\label{fig:mad_flickr_mem_usage}Per-iteration memory usage by MAD when labels and their scores on each node are stored exactly (MAD-Exact) vs using Count-Min Sketch (MAD-Sketch) in the Flickr-1m dataset, with total unique labels $m = 1000000$, and sketch parameters $w = 55$, $d = 17$. We observe that even though MAD-Exact starts out with a lower memory footprint, it runs out of memory (OOM) by third iteration, while MAD-Sketch is able to compactly store all the labels and their scores with almost constant memory usage over the 20 iterations. See \refsec{sec:large_labels} for details.}
\end{figure*}

\subsection{Experimental Setup}
\label{sec:expt_setup}

The three real-world datasets used for the experiments in this section
are described in \reftbl{tbl:datasets}. The Freebase dataset was
constructed out of 18 domains (e.g., \textit{music, people} etc.) from
the full dataset, and was previously used in
\cite{talukdar2010experiments} for determining semantic types of
entities. This dataset consists of 192 labels, with 10 seeds per
labels. This results in the seeding of 1917 nodes, with three nodes
acquiring 2 labels each. This results in a $k-$sparsity with $k =
2$. Both Flickr-10k and Flickr-1m datasets consist of edges connecting
images to various metadata, e.g., tags, geo-location, etc. For these
two datasets, each seed node was injected with its own unique label,
and hence the number of labels are equal to the number of seed
nodes. Given seed node, the goal here is to identify other similar nodes (e.g., other images, tags, etc.). This self-injection-based seeding scheme results in a $k-$sparsity with $k = 1$. In the last two columns of \reftbl{tbl:datasets}, we calculate the width and depth of sketches as prescribed by Theorem 2 for $\epsilon = 0.05$ and $\delta = 0.1$.

We shall compare two graph-based SSL algorithms: {\bf \MadExact{}},
the default MAD algorithm \cite{talukdar:nra09}, where labels and
their scores are stored exactly on each node without any
approximation, and {\bf \MadSketch{}}, the sketch-based version of
MAD, where the labels and their scores on each node are stored inside
a count-min sketch.\footnote{We use the implementation of MAD in the
  Junto toolkit (https://github.com/parthatalukdar/junto) 
  as \MadExact{}, and
  extend it to handle sketches resulting in the implementation of
  \MadSketch{}. We model our count-min sketches after the ones in the
  springlib library (https://github.com/clearspring/stream-lib), and
  use linear hash functions \cite{carter1979universal}. \MadSketch{} source
  code is available as part of the Junto toolkit or by contacting the authors.}. For the experiments in 
  this paper, we found about 10 iterations to be sufficient for convergence of all 
  algorithms.

Once we have label assignments $\YH{}$ on the evaluation nodes on the
graph, we shall use Mean Reciprocal Rank (MRR) as the evaluation
metric, where higher is better.\footnote{ MRR is defined as \(
  \mathrm{MRR} = \frac{1}{|F|} \sum_{v \in F} \frac{1}{r_v} \) where
  $F \subset V$ is the set of evaluation nodes, and $r_v$ is the
  highest rank of the gold label among all labels assigned to node
  $v$.}

\subsection{Checking for Label Skew}
\label{sec:label_skew}

Our benchmark tasks are $k$-sparse for small $k$, so a compact sketch
is possible.  However, analysis suggests other scenarios will also be
amenable to small sketches.  While community structure is expensive to
find in large graphs, it is relatively straightforward to check for
skewed label scores.  To this end, we applied \MadExact{} on two
datasets, Freebase and Flickr-10k.  Plots showing mean label scores
vs.~label rank in the log-log space are shown in
\reffig{fig:freebase_lab_score_dist}, along with a plot of
$\frac{1}{rank}$ for each dataset.  In Zipfian distribution with
$z=1$, these plots should be asymptotically parallel.  We indeed see
highly skewed distributions, with most of the weight concentrated on
the largest scores. These observations demonstrate existence of skewed
label scores (during exact label propagation) in real-world datasets,
thereby providing empirical justification for the analysis in
\refsec{sec:analysis}.

\subsection{Is Sketch-based Graph SSL Effective?}
\label{sec:expt_performance}

In this section, we evaluate how \MadSketch{} compares with \MadExact{} in terms of memory usage, runtime, and performance measured using MRR. We set $\mu_1 = 0.98, \mu_2 = 0.01, \mu_3 = 0.01$, and run both algorithms for 10 iterations, which we found to be sufficient for convergence. Experimental results comparing these two algorithms when applied over the Freebase dataset are shown in \reftbl{tbl:mem_time_mrr_compare}. 

First, we compare \MadExact{} and \MadSketch{} $(w = 109, d = 8)$, where the width and depth of the sketch is as suggested by Theorem 2 (see \reftbl{tbl:datasets}). From \reftbl{tbl:mem_time_mrr_compare}, we observe that \MadSketch{} $(w = 109, d = 8)$ achieves the same MRR as \MadExact{}, while using a lower per-iteration memory footprint. Moreover, \MadSketch{} also achieves a significant speedup of $4.7x$ compared to \MadExact{}. This is our main result, which validates the central thesis of the paper: \textit{Sketch-based data structures when used within graph-based SSL algorithms can result in significant memory and runtime improvements, without degrading performance}. This also provides empirical validation of the analysis presented earlier in the paper.

In \reftbl{tbl:mem_time_mrr_compare}, we also explore effect of
sketches of various sizes on the performance of \MadSketch{}. We
observe that even though very small sketches do degrade accuracy,
sketches smaller than predicted by Theorem~\ref{thm:main} do not,
suggesting that the existence of additional useful structure in these
problems---perhaps a combination of label skew and/or community
structure.

\subsection{Graph SSL with Large Number of Labels}
\label{sec:large_labels}

In previous section, we have observed that \MadSketch{}, a
sketch-based graph SSL algorithm, can result in significant memory
savings and runtime speedups compared to \MadExact{}. In this section,
we evaluate sketches in graph SSL when a large number of labels are
involved. For this, we run \MadExact{} and \MadSketch{} over the
Flickr-1m, a dataset with 1 million labels. Both algorithms were run
for 20 iterations, and plots comparing their per-iteration memory
usage are shown in \reffig{fig:mad_flickr_mem_usage}. From this
figure, we observe that even though \MadExact{} starts out with a
lower memory footprint (due to its sparse representation of label
scores, which are very sparse initially), it runs out of memory by the
third iteration, even when 100GB RAM was available. In contrast,
\MadSketch{} when run with a sketch of $w = 55, d = 17$ as prescribed
by Theorem 2 (see \reftbl{tbl:datasets}) is able to compactly store
all labels and their scores, and does not result in an explosion of
space requirements in later iterations. This demonstrates scalability
of \MadSketch{}, and sketch-based Graph SSL in general, when applied
to datasets with large number labels---the main motivation of
this paper.

\section{Conclusion}
\label{sec:conclusion}

Graph-based Semi-supervised learning (SSL) algorithms have been successfully used in a large number of applications. Such algorithms usually require $O(m)$ space on each node. Unfortunately, for many applications of practical significance with very large $m$ over large graphs, this is not sufficient. In this paper, we propose \MadSketch{}, a novel graph-based SSL algorithm which compactly stores label distribution on each node using Count-min Sketch, a randomized data structure. We present theoretical analysis showing that under mild conditions, \MadSketch{} can reduce space complexity at each node from $O(m)$ to $O(\log m)$, and achieve similar savings in time complexity as well. We support our analysis through experiments on multiple real world datasets. We find that \MadSketch{} is able to achieve same performance as \MadExact{}, a state-of-the-art graph-based algorithm with exact estimates, while requiring smaller memory footprint  and at the same time achieving up to 10x speedup. Also, we find that \MadSketch{} is able to scale up to datasets containing millions of nodes and edges, and more importantly one million  labels, which is beyond the scope of existing graph-based SSL algorithms such as \MadExact{}. As part of future work, we hope to explore how such sketching ideas may be effectively used in other graph-based SSL techniques (e.g., \cite{subramanya2011semi}) which use loss functions other than the squared-loss.

\subsubsection*{Acknowledgments}
This work was supported in part by DARPA (award number FA87501320005),  IARPA (award number FA865013C7360), and Google. Any opinions, findings, conclusions and recommendations expressed in this papers are the authors' and do not necessarily reflect those of the sponsors.

\begin{small}
\bibliographystyle{abbrv}
\bibliography{graph_ssl_sketch}

\begin{thebibliography}{10}

\bibitem{agrawal2013multi}
R.~Agrawal, A.~Gupta, Y.~Prabhu, and M.~Varma.
\newblock Multi-label learning with millions of labels: Recommending advertiser
  bid phrases for web pages.
\newblock In {\em Proceedings of the 22nd international conference on World
  Wide Web}, 2013.

\bibitem{baluja2008video}
S.~Baluja, R.~Seth, D.~Sivakumar, Y.~Jing, J.~Yagnik, S.~Kumar,
  D.~Ravichandran, and M.~Aly.
\newblock Video suggestion and discovery for youtube: taking random walks
  through the view graph.
\newblock In {\em Proceedings of WWW}, 2008.

\bibitem{bengio2010label}
S.~Bengio, J.~Weston, and D.~Grangier.
\newblock Label embedding trees for large multi-class tasks.
\newblock {\em NIPS}, 23(163-171):3, 2010.

\bibitem{DBLP:conf/aaai/CarlsonBKSHM10}
A.~Carlson, J.~Betteridge, B.~Kisiel, B.~Settles, E.~R.~H. Jr., and T.~M.
  Mitchell.
\newblock Toward an architecture for never-ending language learning.
\newblock In {\em AAAI}, 2010.

\bibitem{carter1979universal}
J.~L. Carter and M.~N. Wegman.
\newblock Universal classes of hash functions.
\newblock {\em Journal of computer and system sciences}, 18(2):143--154, 1979.

\bibitem{cormode2005improved}
G.~Cormode and S.~Muthukrishnan.
\newblock An improved data stream summary: the count-min sketch and its
  applications.
\newblock {\em Journal of Algorithms}, 55(1):58--75, 2005.

\bibitem{cormode2005summarizing}
G.~Cormode and S.~Muthukrishnan.
\newblock Summarizing and mining skewed data streams.
\newblock In {\em Proc. of ICDM}, pages 44--55, 2005.

\bibitem{das2012graph}
D.~Das and N.~A. Smith.
\newblock Graph-based lexicon expansion with sparsity-inducing penalties.
\newblock In {\em Proceedings of the 2012 Conference of the North American
  Chapter of the Association for Computational Linguistics: Human Language
  Technologies}, pages 677--687. Association for Computational Linguistics,
  2012.

\bibitem{dekel2002multiclass}
O.~Dekel and Y.~Singer.
\newblock Multiclass learning by probabilistic embeddings.
\newblock In {\em In NIPS}, pages 945--952, 2002.

\bibitem{deng2009imagenet}
J.~Deng, W.~Dong, R.~Socher, L.-J. Li, K.~Li, and L.~Fei-Fei.
\newblock Imagenet: A large-scale hierarchical image database.
\newblock In {\em CVPR}. IEEE, 2009.

\bibitem{goyal2010sketch}
A.~Goyal, J.~Jagarlamudi, H.~Daum{\'e}~III, and S.~Venkatasubramanian.
\newblock Sketch techniques for scaling distributional similarity to the web.
\newblock In {\em Proceedings of the 2010 Workshop on GEometrical Models of
  Natural Language Semantics}. Association for Computational Linguistics, 2010.

\bibitem{rao2009ranking}
D.~Rao and D.~Yarowsky.
\newblock {Ranking and Semi-supervised Classification on Large Scale Graphs
  Using Map-Reduce}.
\newblock {\em TextGraphs}, 2009.

\bibitem{sarlos2006randomize}
T.~Sarl{\'o}s, A.~A. Bencz{\'u}r, K.~Csalog{\'a}ny, D.~Fogaras, and
  B.~R{\'a}cz.
\newblock To randomize or not to randomize: space optimal summaries for
  hyperlink analysis.
\newblock In {\em Proceedings of WWW}, 2006.

\bibitem{shi2009hash}
Q.~Shi, J.~Petterson, G.~Dror, J.~Langford, A.~Smola, and S.~Vishwanathan.
\newblock Hash kernels for structured data.
\newblock {\em The Journal of Machine Learning Research}, 10:2615--2637, 2009.

\bibitem{song2009scalable}
H.~H. Song, T.~W. Cho, V.~Dave, Y.~Zhang, and L.~Qiu.
\newblock Scalable proximity estimation and link prediction in online social
  networks.
\newblock In {\em Proceedings of ACM SIGCOMM}, 2009.

\bibitem{subramanya2011semi}
A.~Subramanya and J.~Bilmes.
\newblock Semi-supervised learning with measure propagation.
\newblock {\em The Journal of Machine Learning Research}, 12:3311--3370, 2011.

\bibitem{talukdar:nra09}
P.~P. Talukdar and K.~Crammer.
\newblock New regularized algorithms for transductive learning.
\newblock In {\em ECML-PKDD}, 2009.

\bibitem{talukdar2010experiments}
P.~P. Talukdar and F.~Pereira.
\newblock Experiments in graph-based semi-supervised learning methods for
  class-instance acquisition.
\newblock In {\em Proceedings of ACL}, 2010.

\bibitem{talukdar2008wsa}
P.~P. Talukdar, J.~Reisinger, M.~Pasca, D.~Ravichandran, R.~Bhagat, and
  F.~Pereira.
\newblock {Weakly-Supervised Acquisition of Labeled Class Instances using Graph
  Random Walks}.
\newblock In {\em Proceedings of EMNLP}, 2008.

\bibitem{Weinberger:2009:FHL:1553374.1553516}
K.~Weinberger, A.~Dasgupta, J.~Langford, A.~Smola, and J.~Attenberg.
\newblock Feature hashing for large scale multitask learning.
\newblock In {\em Proceedings of ICML}, 2009.

\bibitem{wu2012learning}
X.-M. Wu, Z.~Li, A.~M.-C. So, J.~Wright, and S.-F. Chang.
\newblock Learning with partially absorbing random walks.
\newblock In {\em NIPS}, 2012.

\bibitem{YangChute94}
Y.~Yang and C.~Chute.
\newblock An example-based mapping method for text classification and
  retrieval.
\newblock {\em ACM Transactions on Information Systems}, 12(3), 1994.

\bibitem{zhu2003semi}
X.~Zhu, Z.~Ghahramani, and J.~Lafferty.
\newblock Semi-supervised learning using gaussian fields and harmonic
  functions.
\newblock In {\em ICML}, 2003.

\end{thebibliography}
\end{small}

\end{document}